\newtheorem{theorem}{Theorem}
\title{FastAMI --- a Monte Carlo Approach to the Adjustment for Chance\\in Clustering Comparison Metrics}
\author{
    %Authors
    % All authors must be in the same font size and format.
    Kai Klede, Leo Schwinn, Dario Zanca, Bj\"orn Eskofier
}
\begin{document}

\maketitle

\begin{abstract}
  Clustering is at the very core of machine learning, and its applications proliferate with the increasing availability of data. However, as datasets grow, comparing clusterings with an adjustment for chance becomes computationally difficult, preventing unbiased ground-truth comparisons and solution selection. We propose FastAMI, a Monte Carlo-based method to efficiently approximate the Adjusted Mutual Information (AMI) and extend it to the Standardized Mutual Information (SMI). The approach is compared with the exact calculation and a recently developed variant of the AMI based on pairwise permutations, using both synthetic and real data. In contrast to the exact calculation our method is fast enough to enable these adjusted information-theoretic comparisons for large datasets while maintaining considerably more accurate results than the pairwise approach.
\end{abstract}

\section{Introduction}

Clustering comparison measures such as the \textit{mutual information} or the \textit{Rand index} are most commonly used to validate clusterings when ground truth information is available \cite{aggarwal_data_2013}. In the context of graphs, the mutual information of the neighborhoods of two nodes indicates their similarity and can predict missing links \cite{hoffman_note_2015,shakibian_mutual_2017}. Another application is multiple-clustering algorithms, which measure the mutual information or one of its variants to identify multiple qualitatively different clustering solutions for a single dataset \cite{muller_discovering_2010, wei_inductive_2021}. Other uses include categorical feature selection, where each feature is understood as a cluster or for the solution selection in consensus clustering \cite{lancichinetti_consensus_2012}.

A well-known problem with these clustering comparison measures is that they do not assume a constant baseline value when comparing two random clustering partitions. Instead, they tend to be larger when the number of clusters approaches the number of data points \cite{vinh_information_2009}, leading to a bias towards smaller clusters when used as an external validation criterion. To obtain a constant baseline, these measures have been adjusted by their expectation value under random permutations of the cluster labels in the \textit{Adjusted Rand index} (ARI) \cite{hubert_comparing_1985} and the \textit{Adjusted Mutual information} (AMI) \cite{vinh_information_2010}.

\citet{romano_adjusting_2016} created an overview of those clustering comparison measures mentioned above and others. They analyzed the relationships between these measures and formulated a guideline for which one to use in which situation. Quoting directly from the abstract, the authors conclude that
\begin{quote}
  ARI should be used when the reference clustering has large equal-sized clusters; AMI should be used when the reference clustering is unbalanced and there exist small clusters.
\end{quote}
However, it has been shown that the computational complexity of the adjusted mutual information is $\mathcal{O}(\max(R,C)N)$, where $R,C$ are the numbers of clusters in the clusterings to be compared and $N$ is the number of data points \cite{romano_standardized_2014}. For large datasets with many small clusters and few bigger clusters, which is exactly the domain where the AMI should be used, its computation becomes difficult and often impractical. Many datasets originating from different domains exhibit these characteristics, such as social networks, collaboration networks, or web graphs \cite{leskovec_snap_2014}. In practice, many authors resort to the non-adjusted normalized mutual information \cite{tian_clustering_2019,chunaev_community_2020,karim_deep_2021}, which is faster to compute, but does not account for random coincidences.

As a workaround, \citet{lazarenko_pairwise_2021} proposed to consider only pairwise permutations in the adjustment for chance, i.e., to only adjust for clusterings where two individual samples exchanged their cluster labels. While this approach allows for faster computations ($\mathcal{O}(R C)$), these pairwise permutations lead to a higher amount of shared information. In Figure \ref{fig:synthetic_emi} we show that the actual \textit{expected mutual information} (EMI) is overestimated by the pairwise EMI. Consequently, the pairwise AMI systematically underestimates the AMI, and the results cannot be compared with existing values. Even the relative order of cluster similarity is affected, as shown in Table \ref{tab:gagolewski_results}.

With FastAMI, we propose a Monte Carlo-based approach to enable AMI comparisons for large datasets with small clusters\footnote{Code at https://github.com/mad-lab-fau/fastami-benchmark}. The presented method allows for fine-grained control over its accuracy and addresses the shortcomings of the pairwise approach. We prove a pessimistic upper bound to the expected runtime that is on par with the pairwise approach. We demonstrate, on synthetic and real data, that FastAMI is considerably faster in practice while producing more accurate results. FastAMI also works when both the exact and the pairwise implementation fail due to memory requirements or impractical runtimes. In the process, we identify and fix a common flaw in synthetic clustering comparison benchmarks.

While the AMI is unbiased when comparing two random clusterings, it has been shown that it still exhibits a bias towards selecting larger clusters when comparing multiple random clusterings against a fixed ground truth \cite{romano_standardized_2014}. As a solution, it was suggested to account also for the variance in the \textit{Standardized Mutual Information} (SMI), which, according to the authors, should be employed when ${N}/{R C} < 5$ and more than three clusterings are compared. However, adoption of the SMI is lagging behind the AMI, and a potential reason might be its steep computational cost $\mathcal{O}(R C N^3)$. We extend our Monte Carlo-based approach to the SMI, making it a computationally feasible alternative for small to medium-sized datasets.

\section{Background and Related Work}
% Basic definition of AMI
Let $S$ be a set of $N$ data points $\{s_1,s_2,\dots s_N\}$, and two clusterings the surjections $u\colon S \to \{1,2,\dots,R\}$ and $v\colon S \to \{1,2,\dots,C\}$ that partition the dataset $S$ into $R$ and $C$ clusters respectively. We denote with $a_i \coloneqq |u^{-1}(i)|,\; b_j \coloneqq |v^{-1}(j)|$ the cluster sizes and $n_{ij} \coloneqq |u^{-1}(i) \cap v^{-1}(j)|$ the number of shared data points between two clusters. % $(n_{ij})_{i\in\{1,\dots,R\},j\in\{1,\dots,C\}}$ is called the contingency matrix.

The \textit{mutual information} \cite{cover_elements_2008} of the two clusterings is given by
\begin{equation}
  I(u,v) \coloneqq \sum_{i=1}^R\sum_{j=1}^C \frac{n_{ij}}{N} \log{\left(\frac{N \cdot n_{ij}}{a_i \cdot b_j}\right)},\label{eq:mutual_information}
\end{equation}
where ${n_{ij}}/{N} \log{\left({N n_{ij}}/(a_i b_j)\right)}$ is defined to be zero for $n_{ij}=0$. For example, the mutual information of two clusterings that assign every point in the dataset to a single cluster is zero. In the other extreme, when every point is an individual cluster in both clusterings,  the mutual information is maximal $\log{N}$. These simple examples already illustrate a problem for the use as a cluster evaluation metric: The mutual information is expected to increase with the number of clusters (Figure \ref{fig:synthetic_emi}), which induces a bias towards more granular clusterings.

Therefore, the mutual information is adjusted with its expected value under random permutations of the cluster assignments, while the number and size of the clusters $A = \{a_1,\dots a_R\},\; B = \{b_1,\dots b_C\}$ is kept constant (\textit{random permutation model}) \cite{vinh_information_2009}
\begin{align}
  \operatorname{E}\{I\,|\,A,B\} =& \sum_{i=1}^R \sum_{j=1}^C \sum_{\substack{
      n_{ij}=\max(0, \\ a_i + b_j - N)
    }
  }^{\min(a_i,b_j)}\nonumber\\
  &\frac{n_{ij}}{N} \log\left(\frac{N n_{ij}}{a_i b_j}\right) P\{n_{ij}|a_i,b_j,N\}. \label{eq:expected_mutual_information}
\end{align}
Here $P\{n_{ij}|a_i,b_j,N\}$ denotes the probability that $n_{ij}$ of the $a_i$ data points in the cluster $u^{-1}(i)$ also lie in the cluster $v^{-1}(j)$ of size $b_j$ after random permutation and is given by the hypergeometric distribution.

The \textit{adjusted mutual information} can subsequently be defined as
\begin{equation}
  \operatorname{AMI}(u, v)\coloneqq \frac{I(u,v) - \operatorname{E}\{I|A,B\}}{\operatorname{avg}(H(u), H(v)) - \operatorname{E}\{I|A,B\}},\label{eq:definition_ami}
\end{equation}
where $\operatorname{avg}(H(u), H(v))$ is an upper bound to the mutual information given $A,B$ that serves as a normalizer, with the entropy
\begin{equation}
    H(u)\coloneqq\sum_{i=1}^{R}\frac{a_i}{N}\log{\frac{a_i}{N}}.
\end{equation}
The arithmetic and geometric mean, the minimum, and maximum can be used for $\operatorname{avg}$ \cite{vinh_information_2010}. In this work, we choose the arithmetic mean following the default choice of {\tt sklearn} \cite{pedregosa_scikit-learn_2011}.

To reduce the computational complexity of the expected mutual information, \citet{lazarenko_pairwise_2021} restricted its computation to pairwise permutations only
\begin{equation}
  \operatorname{EMI}_\text{pair}(u,v) \coloneqq \operatorname{E}\{I(u,\sigma \circ v)\,|\,\sigma\in S_N: \sigma \text{ pairwise}\}.
\end{equation}
A permutation $\sigma\in S_N$ is pairwise when there exist $i,j\in\{1,\dots,N\}$ such that $\sigma(i) = j$, $\sigma(j)=i$, and $\sigma(t)=t\; \forall t\neq i,j$.
While the original authors defined the pairwise AMI without normalization, we normalize it as in equation \ref{eq:definition_ami} to enable a direct comparison with the exact results.
%While originally stated without normalization, we define pairwise AMI as in equation \ref{eq:definition_ami} for comparison with the exact results.

The $\operatorname{AMI}$ has a constant baseline when comparing two clusterings directly. However, when comparing clusterings via a constant ground truth reference, the $\operatorname{AMI}$ is biased towards larger numbers of clusters \cite{romano_standardized_2014}. The definition of the \textit{standardized mutual information} as
\begin{equation}
  \operatorname{SMI}(u,v) \coloneqq \frac{I(u,v)-\operatorname{E}\{I\,|\,A,B\}}{\sqrt{\operatorname{Var}\{I\,|\,A,B\}}}\label{eq:standardized_mutual_information}
\end{equation}
accounts for that bias. The $\operatorname{SMI}$ indicates by how many standard deviations two clusterings deviate from each other under the hypothesis of random and independent clusterings.

\begin{figure}
  \centering
  \includegraphics[width=0.9\columnwidth]{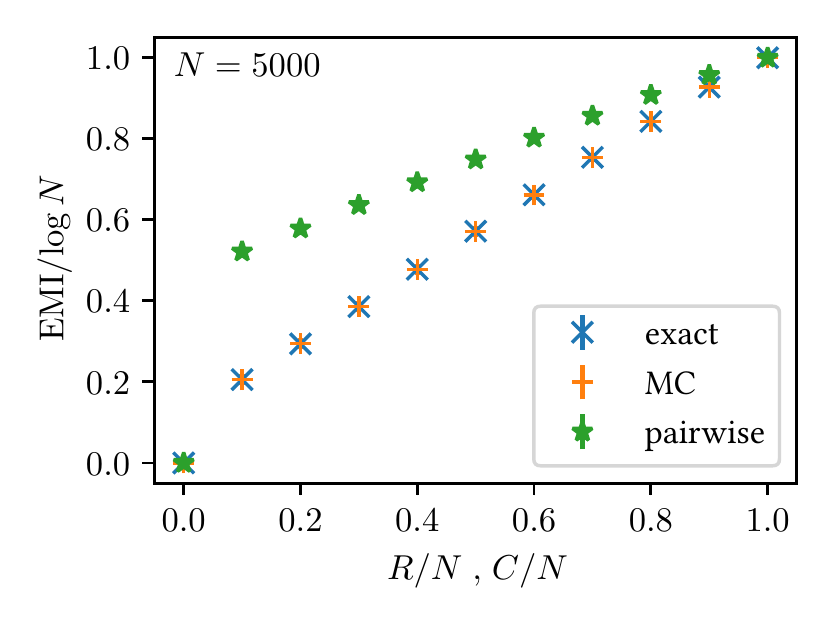}
  \caption{The mutual information is expected to increase as the number of clusters $R=C$ in the clusterings to be compared approaches the number of data points $N$. The expected value under pairwise permutations overestimates the expected mutual information, while our Monte Carlo (MC) approach reproduces the exact results. The figure shows the average values for $200$ pairs of clustering marginals $A, B$, where each marginal is chosen uniformly at random from all partitions of $N=5000$ into $R=C$ parts.}\label{fig:synthetic_emi}
\end{figure}

\section{Method}
The AMI exhibits poor runtimes, mainly because the EMI is computationally expensive. The calculation of $\operatorname{Var}\{I|A,B\}$ for the SMI is even more demanding. Therefore we approximate the EMI and the variance of the mutual information via Monte Carlo sampling.

\subsection{Monte Carlo Estimate of the EMI}
We observe that the shared information of two clusters, i.e., the summands in Equation \ref{eq:expected_mutual_information}, is fully determined by the cluster sizes $a, b$, their overlap $n$ and the total number of samples $N$. Hence, the expected mutual information can be reformulated in terms of the probability $P\{a|A\}$ of a cluster size $a$ in the marginals $A$
\begin{align}
  E\{I|A,B\} = & R C \sum_a \sum_b P\{a|A\} P\{b|B\}\nonumber                                               \\
               & \sum_{n} \frac{n}{N} \log\left(\frac{N n}{a b}\right) P\{n|a,b,N\}. \label{eq:emi_overlap}
\end{align}
We can now proceed to draw samples for $a, b$ and $n$ and approximate the sum through the sample mean. We can readily generate variates for $a$ and $b$ in constant time via Walker's method of alias \cite{walker_new_1974, walker_efficient_1977} after an initial setup cost of $\mathcal{O}(R\log R + C\log C)$ for determining the cluster size frequencies of $A$ and $B$.

Directly drawing the overlap of two clusters $n$ from the hypergeometric distribution would lead to a high occurrence of no overlap $n=0$ in the case of small cluster sizes. However, these terms do not contribute to the mutual information. To exploit that sparsity, we absorb the linear term of the mutual information into the probability density function
\begin{equation}
  n P\{n|a,b,N\} = n\frac{\binom{b}{n} \binom{N-b}{a-n}}{\binom{N}{a}}.
\end{equation}
The binomial coefficients are rewritten as
\begin{equation}
  n\binom{b}{n} = \frac{b!}{(n-1)!\,(b-n)!} = b \binom{b-1}{n-1},
\end{equation}
allowing us to draw variates $n-1$ according to the hypergeometric distribution with $a-1,b-1$ and $N-1$
\begin{align}
  \begin{split}
    n P\{n|a,b,N\} =& \frac{a\cdot b}{N} \frac{\binom{b-1}{n-1} \binom{N-b}{a-n}}{\binom{N-1}{a-1}}\\
    =& \frac{a\cdot b}{N}P\{n-1|a-1,b-1,N-1\}.
  \end{split}\label{eq:hypergeometric_trick}
\end{align}

Several methods to draw hypergeometric variates in on average constant time exist \cite{kachitvichyanukul_computer_1985,hormann_universal_1994}, here we chose a ratio of uniforms approach introduced by \citet{stadlober_ratio_1990}. The approximation of Equation \ref{eq:emi_overlap} terminates, when the relative error estimate of the $\operatorname{EMI}$ reaches a desired precision ${s_{\operatorname{EMI}}}/{\operatorname{EMI}}\leq p$. To ensure termination when the $\operatorname{EMI}$ approaches zero, the algorithm switches to an absolute error criterion $s_{\operatorname{EMI}} \leq p$ when $\operatorname{EMI} < 1$.

Summarizing all the steps above, we formulate Algorithm \ref{alg:emi_fixed_precision}, a Monte Carlo estimate for the EMI. Samples can be drawn in constant time after an initial setup cost of $\mathcal{O}(R\log R + C\log C)$, such that in practice, the number of Monte Carlo samples governs the runtime.

\begin{algorithm}[t]
  \caption{The core of FastAMI is a Monte Carlo approximation to the expected mutual information. We use a numerically stable variant of Welford's online algorithm for calculating the mean and sample variance and define a stopping criterion when the desired precision is reached.}\label{alg:emi_fixed_precision}
  \begin{algorithmic}
    \REQUIRE Cluster sizes $A,B$, EMI precision $p$, minimum number of samples $i_{\text{min}} > 1$
    \ENSURE $\operatorname{EMI}\sim \mathcal{N}\left(E\{I|A,B\}, s_{\operatorname{EMI}}^2\right)$ where $\min\{s_{\operatorname{EMI}}/\operatorname{EMI},s_{\operatorname{EMI}}\} \leq p$
    \STATE $N \gets \sum_{a \in A} a$
    \STATE $i \gets 0,\;\operatorname{EMI} \gets 0,\;M_2 \gets 0$
    \STATE Setup Walker random sampling for $A, B$
    \WHILE{$i < i_{\text{min}}$ or $M_2 > (p \max\{1,\operatorname{EMI}\})^2 \cdot i \cdot(i-1)$}
    \STATE $i \gets i + 1$
    \STATE Draw $a$, $b$ via Walker's Alias method
    \STATE Draw $m \sim P\{m|a-1,b-1,N-1\}$ via Stadlober's method
    \STATE $x \gets a b \log{[{(m + 1) N}/{(ab)}]}$
    \STATE $\Delta_0 \gets x - \operatorname{EMI}$
    \STATE $\operatorname{EMI} \gets \operatorname{EMI} + \Delta_0/i$
    \STATE $\Delta_1 \gets x - \operatorname{EMI}$
    \STATE $M_2 \gets M_2 + \Delta_0 \Delta_1$
    \ENDWHILE
    \STATE $s_{\operatorname{EMI}} = \sqrt{M_2/(i\cdot(i-1))}$
  \end{algorithmic}
\end{algorithm}

\subsection{Upper Bound to the Expected EMI Runtime}\label{sec:expected_runtime_emi}
Given that samples can be drawn on average in constant time, we derive an asymptotic upper bound to the expected number of samples required for convergence of the EMI.

\begin{theorem}
    The expected number of samples required for Algorithm \ref{alg:emi_fixed_precision} with precision $p$ to terminate on two clusterings with $R$ and $C$ clusters and $N\geq 3$ data points is asymptotically bounded by $\mathcal{O}\left({RC}/{p^2}\right)$. \label{thm:asymptotic_upper_bound_emi_runtime}
\end{theorem}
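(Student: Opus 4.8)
The plan is to reduce the theorem to a single moment estimate. Let $Y$ denote the per-draw contribution accumulated in the loop, rescaled so that its mean is the quantity being estimated, $\operatorname{E}[Y]=\operatorname{E}\{I\,|\,A,B\}$; by (\ref{eq:emi_overlap})--(\ref{eq:hypergeometric_trick}) this is
\[ Y \coloneqq \frac{RC}{N^2}\, a\, b\, \log\frac{N(m+1)}{ab}, \]
with $a\sim P\{\cdot\,|A\}$, $b\sim P\{\cdot\,|B\}$ independent and $m\sim P\{m\,|a-1,b-1,N-1\}$. The quantity $M_2/(i(i-1))$ is exactly the squared standard error $s_{\operatorname{EMI}}^2$ of the running mean, and the loop stops once $s_{\operatorname{EMI}}\le p\max\{1,\operatorname{EMI}\}$. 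Since $M_2/(i-1)$ is an unbiased, concentrating estimator of $\operatorname{Var}[Y]$ and the running mean tends to $\operatorname{E}[Y]$, termination happens once the sample count passes the oracle value $\operatorname{Var}[Y]/(p\max\{1,\operatorname{E}\{I|A,B\}\})^2$. Because $\max\{1,\cdot\}\ge 1$ and $\operatorname{Var}[Y]\le\operatorname{E}[Y^2]$, it suffices to prove $\operatorname{E}[Y^2]=\mathcal{O}(RC)$; the expected number of samples is then $\mathcal{O}(\operatorname{E}[Y^2]/p^2)=\mathcal{O}(RC/p^2)$.

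To turn this heuristic into a bound on the expected stopping time $T$, I would use that $Y$ is bounded ($|Y|\le RC\log N$, since $ab\le N^2$ and $|\log(Nn/ab)|\le\log N$), so all its moments are finite. Splitting $\operatorname{E}[T]=\sum_i\Pr[T>i]$ at a deterministic $i_0=\Theta(\operatorname{E}[Y^2]/p^2)$ and using the inclusion $\{T>i\}\subseteq\{M_{2,i}>p^2\,i(i-1)\}$ together with $\operatorname{E}[M_{2,i}]=(i-1)\operatorname{Var}[Y]$, a concentration bound on $M_{2,i}$ controls the tail. Markov or Chebyshev alone only give a nonsummable $1/i$ tail, so one must use a fourth- (or higher-) moment version of the deviation bound to make $\sum_{i>i_0}\Pr[T>i]$ converge and conclude $\operatorname{E}[T]\le i_{\min}+i_0+\mathcal{O}(1)$. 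Given boundedness of $Y$, I regard this step as routine relative to the moment estimate below.

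The quantitative core is the bound $\operatorname{E}[Y^2]=\mathcal{O}(RC)$. Writing $\operatorname{E}[Y^2]=\frac{(RC)^2}{N^4}\operatorname{E}_{a,b}\!\big[(ab)^2 g(a,b)\big]$ with $g(a,b)\coloneqq\operatorname{E}_n[\log^2(Nn/ab)]$, the crude estimate $g\le\log^2 N$ factorizes over the independent draws and, with $\sum_i a_i^2\le(\sum_i a_i)^2=N^2$ giving $\operatorname{E}[a^2]\le N^2/R$ and likewise $\operatorname{E}[b^2]\le N^2/C$, already yields $\operatorname{E}[Y^2]=\mathcal{O}(RC\log^2 N)$ --- the claimed $RC$ scaling up to a logarithmic factor. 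The hard part, and the step I expect to be the main obstacle, is removing that logarithm: it arises only from replacing the log-ratio by its worst case, whereas $n=m+1$ concentrates around the hypergeometric mean $\approx ab/N$, so that $Nn/ab\approx 1$ and $g(a,b)$ is typically of order $N/(ab)$. I would therefore prove a sharp second-moment bound $g(a,b)=\mathcal{O}(N/(ab))$, uniformly in $a,b$ for $N\ge 3$, by controlling the small-$n$ tail (where the logarithm is large) against the rapidly decaying hypergeometric weight. Substituting this gives $\operatorname{E}[Y^2]=\mathcal{O}\!\big(\frac{(RC)^2}{N^3}\operatorname{E}[a]\operatorname{E}[b]\big)=\mathcal{O}(RC/N)$, which lies comfortably inside the required $\mathcal{O}(RC)$ and completes the argument.
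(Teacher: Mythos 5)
Your proposal is correct in outline and shares the paper's skeleton --- both convert the stopping rule into an oracle sample count $\operatorname{Var}\{Y\}/(p\max\{1,\operatorname{EMI}\})^2$ and reduce the theorem to a bound on the per-sample variance --- but your variance bound takes a genuinely different route. The paper invokes the Bhatia--Davis inequality with support bounds $m=0$ and $M=RC/e$ (scaled from $ab\log(nN/(ab))\le Nb\log(N/b)\le N^2/e$), so that $\operatorname{Var}\{Y\}\le(RC/e-\operatorname{EMI})\operatorname{EMI}$ and the $\operatorname{EMI}$ factor cancels in both branches of the stopping criterion; you instead bound the raw second moment via $\operatorname{E}[Y^2]=\frac{(RC)^2}{N^4}\operatorname{E}\left[(ab)^2 g(a,b)\right]$, independence of $a$ and $b$, and the moment facts $\operatorname{E}[a^2]\le N^2/R$, $\operatorname{E}[b^2]\le N^2/C$. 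Your route buys robustness exactly where the paper is shaky: Bhatia--Davis needs bounds on the \emph{support} of the variate, and the paper's $m=0$ is false per sample, since $ab\log(Nn/(ab))<0$ whenever $n<ab/N$ (nonnegativity holds only for the expectation over $n$, i.e., for the mutual information itself); the true lower support bound is of order $-RC\log N$ (take $a=b=N/2$, $n=1$), with which Bhatia--Davis would degrade to $O((RC)^2\log N)$. Your second-moment argument needs no lower support bound at all, and your crude estimate $g\le\log^2 N$ already yields a fully rigorous $O(RC\log^2 N/p^2)$. The one unproven ingredient is the sharp lemma $g(a,b)=O(N/(ab))$, which you rightly flag as the crux; it is true and provable along the lines you sketch: for $ab\le N$ it is elementary, since $\log^2(Nn/(ab))\le 2\log^2(N/(ab))+2\log^2 n$, the inequality $\log^2 x\le(4/e^2)x$ for $x\ge1$ handles the first term, and $\log^2 n\le n$ with $\operatorname{E}[n]=1+(a-1)(b-1)/(N-1)\le 2$ handles the second; for $ab>N$ one uses concentration of $n$ around $ab/N$ with squared relative fluctuations of order $N/(ab)$ plus the exponentially small lower tail you mention. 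That gives $\operatorname{E}[Y^2]=O(RC/N)$, strictly stronger than the theorem and consistent with the paper's own remark that the Bhatia--Davis limit is crude compared to observed runtimes. Finally, your insistence on formalizing the random stopping time goes beyond the paper, which stops at the central-limit identity; this step really is routine once you note $M_{2,i}\le\sum_{k\le i}Y_k^2\le i\,(RC\log N)^2$, so the loop terminates deterministically once $i\ge 1+(RC\log N)^2/p^2$, and only the window between your $i_0$ and this deterministic cap requires a concentration estimate for the sample variance.
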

\begin{proof}
    The absolute error of the Monte Carlo approximation $s_{\operatorname{EMI}}$ for a given number of samples $N_\text{samples}$ is
    \begin{equation}
      s^2_{\operatorname{EMI}} = \frac{\operatorname{Var}\left\{\frac{RC}{N^2} a b \log\left(\frac{N n}{a b}\right) | A,B\right\}}{N_\text{samples}},
    \end{equation}
    in the limit of many samples, according to the central limit theorem. Conversely, the expected number of samples to reach a particular error is
    \begin{equation}
      N_\text{samples} = \frac{\operatorname{Var}\left\{\frac{RC}{N^2} a b \log\left(\frac{N n}{a b}\right) | A,B\right\}}{s^2_{\operatorname{EMI}}}.
    \end{equation}
    The Bhatia-Davis inequality \cite{bhatia_better_2000} provides an upper bound to the variance when the variates are bounded $m \leq \frac{RC}{N^2} a b \log\left(\frac{N\cdot n}{a b}\right) \leq M$
    \begin{equation}
      \operatorname{Var}\left\{\frac{RC}{N^2} a b \log\left(\frac{N n}{a b}\right) | A,B\right\} \leq \left(M - \operatorname{EMI}\right)\left(\operatorname{EMI} - m\right).\label{eq:bhatia-davis}
    \end{equation}
    Equality holds when the probability mass function is concentrated on the extremes of the interval. The information contained in the overlap of two clusters is larger or equal than zero ($m=0$), where equality is reached when one of the clusterings has only a single cluster $a=N$ or $b=N$. For the upper bound, note that the overlap of two clusters is maximally the size of the smaller cluster $n\leq \min\{a,b\}$. Without loss of generality, assume $a\leq b$
    \begin{equation}
      ab\log{\frac{nN}{ab}} \leq N b\log{\frac{N}{b}} \leq \frac{N^2}{e} = M \quad \text{for } N\geq e,
    \end{equation}
    With Equation \ref{eq:bhatia-davis} this gives the following upper bound to the expected number of required Monte Carlo samples
    \begin{equation}
      N_\text{samples} \leq \frac{(RC - \operatorname{EMI})\operatorname{EMI}}{s_{\operatorname{EMI}}^2}
    \end{equation}
    The convergence criterion in Algorithm \ref{alg:emi_fixed_precision} requires $s_{\operatorname{EMI}} \leq p$ for $\operatorname{EMI} \leq 1$ and $s_{\operatorname{EMI}} \leq p\operatorname{EMI}$ for $\operatorname{EMI} > 1$, such that
    \begin{equation}
      N_\text{samples} \leq \mathcal{O}\left(\frac{RC}{p^2}\right).
    \end{equation}
\end{proof}

This amounts to the same asymptotic runtime as the pairwise adjusted EMI for a fixed precision $p$. However, the Bhatia-Davis inequality gives only a crude theoretical limit and the experiments will shed more light on the algorithm's runtime.

\subsection{Monte Carlo Estimation of the Variance}\label{sec:method_variance}

We compare two approaches for approximating the variance of the mutual information:
\begin{itemize}
    \item \textit{Separate Monte Carlo} - We split the variance into two terms $\operatorname{Var}\{I\,|\,A,B\} = \operatorname{E}\{I\,|\,A,B\}^2 - \operatorname{E}\{I^2\,|\,A,B\}$, where the first term is approximated as in Algorithm \ref{alg:emi_fixed_precision}. We apply Equation \ref{eq:hypergeometric_trick} and related identities to the explicit formula for $\operatorname{E}\{I^2\,|\,A,B\}$ as given in theorem 1 in \cite{romano_standardized_2014}, and approximate it via an analogous algorithm.
    \item \textit{Direct Monte Carlo} - Instead of generating individual values for $n_{ij}$, we randomly sample the space of all contingency matrices $(n_{ij})_{i\in\{1,\dots,R\}}^{j\in\{1,\dots,C\}}$ at once with a method developed by \citet{patefield_algorithm_1981} and observe the mutual information. The estimators for the EMI and the variance are then simply the sample mean and variance.
\end{itemize}
The direct Monte Carlo approach can also be applied to the EMI. However, the whole contingency matrix would be kept in memory. This is not practical for the approximation of the AMI, since memory footprint was one of the limiting factors we optimized (see Table \ref{tab:snap_results}).

\section{Experiments}\label{sec:experiments}

In the previous section, we introduced a method to approximate the EMI and variance via Monte Carlo sampling. We now proceed to evaluate how quickly these methods converge. Different parameter regimes for $R, C$, and $N$ are explored using synthetic data, and we demonstrate the practical use of the method on real datasets from a wide range of fields.

A Laptop with an Intel i7-10750H with $\SI{32}{\giga\byte}$ RAM was used for the synthetic experiments and the experiments on the \textit{Benchmark Suite for Clustering Algorithms}. The experiments with the larger datasets from the \textit{Stanford Large Network Dataset Collection} were performed on an Intel Xeon E5-2680 v4 system with $\SI{512}{\giga\byte}$ RAM.

\subsection{Synthetic Data}\label{sec:synthetic_experiment}

Previous works usually generated random clusterings by assuming a fixed number of clusters and then assigning each data point to a random cluster uniformly \cite{vinh_information_2009,romano_standardized_2014}. This method overemphasizes balanced clusterings, where all clusters have similar sizes. Choosing a random probability distribution instead of the uniform assignment allows more variance in the cluster sizes \cite{lazarenko_pairwise_2021}. However, both methods do not guarantee the fixed number of clusters they were intended to generate. In the first case, exactly one cluster gets assigned zero data points with a probability of $C\left((C-1)/{C}\right)^N$ and using the inclusion-exclusion principle, the probability of at least one empty cluster is
\begin{align}
  P\{\exists i: i \not\in \operatorname{Im}{v}\} =& \sum_{i=1}^C (-1)^{i-1} \binom{C}{i} \left(\frac{C-i}{C}\right)^N.\label{eq:sampling_flaw}
\end{align}
$\operatorname{Im}{v}$ denotes the image of clustering $v$, i.e. the set of all labels. For a fixed number of data points $N$, the chance of an empty cluster grows with the number of clusters, such that for example for $N=100$ and $C=30$ this probability is already $\approx \SI{66.5}{\percent}$. This chance is even amplified when selecting the probabilities at random since cluster labels can have arbitrarily low probability.

Instead, we propose a maximum entropy approach: Given a fixed number of data points $N$ and fixed numbers of clusters $R, C$, we randomly sample the space of all possible cluster size distributions $A$ and $B$ that fulfill these constraints, i.e., we uniformly sample the integer partitions of $R$ and $C$. We use a rejection sampling method that was outlined by \citet{arratia_probabilistic_2015} and originally developed by \citet{fristedt_structure_1993}. Sampling cluster size distributions is enough for the EMI, which depends only on $A, B$, whereas the SMI depends on two concrete clusterings. In this case, we create an ordered cluster from the marginals and shuffle it randomly, guaranteeing fixed $R,C$.

\begin{figure*}
  \centering
  \includegraphics[width=0.9\textwidth]{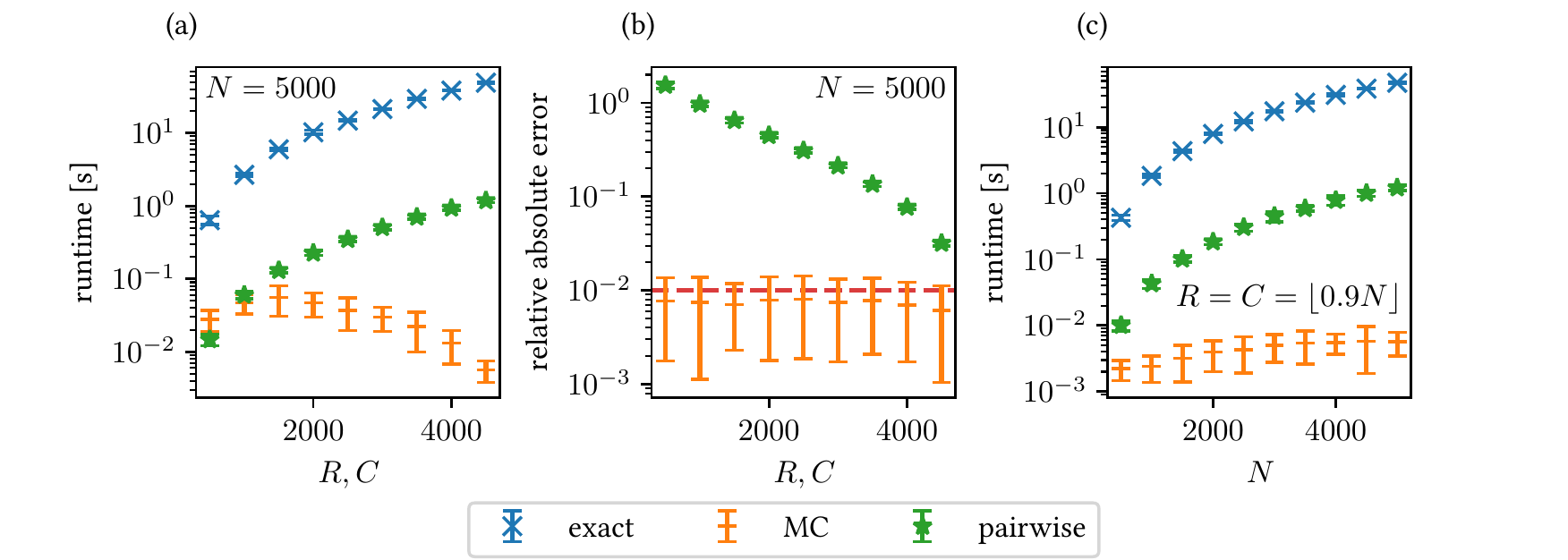}
  \caption{We compute the EMI of 200 pairs of random cluster size distributions with $R=C$ clusters and $N$ data points, comparing three different methods. (a) The exact EMI as implemented in {\tt sklearn} \cite{pedregosa_scikit-learn_2011} and the pairwise EMI take increasingly longer to run as the number of clusters $R=C$ approaches the number of data points $N$. Our Monte Carlo method not only outperforms the latter two, but it performs best for larger numbers of clusters. (b) The Monte Carlo implementation reaches the prescribed precision of $0.01$ (dashed, red line) on average, while the pairwise adjustment introduces model-dependent errors. In (c) we repeat the experiment but now varying $N$ whilst fixing $R=C=\lfloor 0.9 N \rfloor$. The performance gap grows with increasing dataset size.}\label{fig:synthetic_emi_runtime_samples}
  % TODO: Maybe put implementation details into footnotes, github link lazarenko, sklearn hint, maybe also CPU info
  % This can be achieved as follows https://tex.stackexchange.com/questions/10181/using-footnote-in-a-figures-caption
\end{figure*}

\subsubsection{EMI}

The computationally demanding part of computing the adjusted mutual information is obtaining the expected value of the mutual information (EMI). Therefore, we compare the methods to compute the EMI for $R=C$ and a fixed $N=5000$. We focus on $R=C$ since the asymptotic runtime for the pairwise and our Monte Carlo algorithm increases the most for that configuration. Both the exact calculation and the pairwise EMI exhibit a nearly $100$-fold increase in runtime, as the number of clusters increases from $500$ to $4500$ (Figure \ref{fig:synthetic_emi_runtime_samples}a). This slowdown is problematic since it is the regime where the AMI is preferable to the Rand Index, as stated by \citet{romano_adjusting_2016}.
On the other hand, our Monte Carlo method performs best in the case of many clusters. The reason is that when smaller clusters dominate as $R=C$ increases, the variance of the number of shared data points $n$ between clusters decreases, and fewer Monte Carlo samples are required for convergence. In the case of only a few larger clusters, there are fewer cluster size values the variates $a$ and $b$ can assume, and hence the runtime peaks somewhere between those extremes. This empirical result confirms that the upper bound in theorem \ref{thm:asymptotic_upper_bound_emi_runtime} overestimates the runtime.

In the regime of many singleton clusters, i.e., high $R,C$, many permutations do not affect the contingency matrix $(n_{ij})$ and hence the mutual information. These permutations contribute to the slowdown of the exact and the pairwise approach, but it also means that leaving them out only slightly affects the pairwise EMI and the relative error is low in that regime (Figure \ref{fig:synthetic_emi_runtime_samples}b). On the other hand, the pairwise approach has high model-dependent errors, where the method is fast. Our approach allows for tunable relative errors across the parameter range (dashed red line in Figure \ref{fig:synthetic_emi_runtime_samples}b).

In a second experiment, we show that the runtime of the exact and pairwise method gets even worse as the number of data points $N$ increases (Figure \ref{fig:synthetic_emi_runtime_samples}c).  

\subsubsection{SMI}

We compare the exact SMI implemented by \citet{romano_standardized_2014} with the separate and direct Monte Carlo approach for estimating the variance. We terminate the calculation when the estimated absolute or relative error estimate of the SMI is below the precision $p=0.1$. Both approximate algorithms outperform the exact SMI, but the separate approach slows down as the number of clusters increases. In contrast, the direct approach provides a speed-up of multiple orders of magnitude across the whole parameter range (Figure \ref{fig:smi_runtime}). A possible explanation could be that the direct approach naturally incorporates the constraint that the variance is larger or equal than zero. The sample variance of the mutual information is always positive, but the difference of two separate estimators for $\operatorname{E}\{I\,|\,A,B\}^2$ and $\operatorname{E}\{I^2\,|\,A,B\}$ can also be negative. In the following we chose the direct approach for FastSMI.

\begin{figure}
    \centering
    \includegraphics[width=0.9\columnwidth]{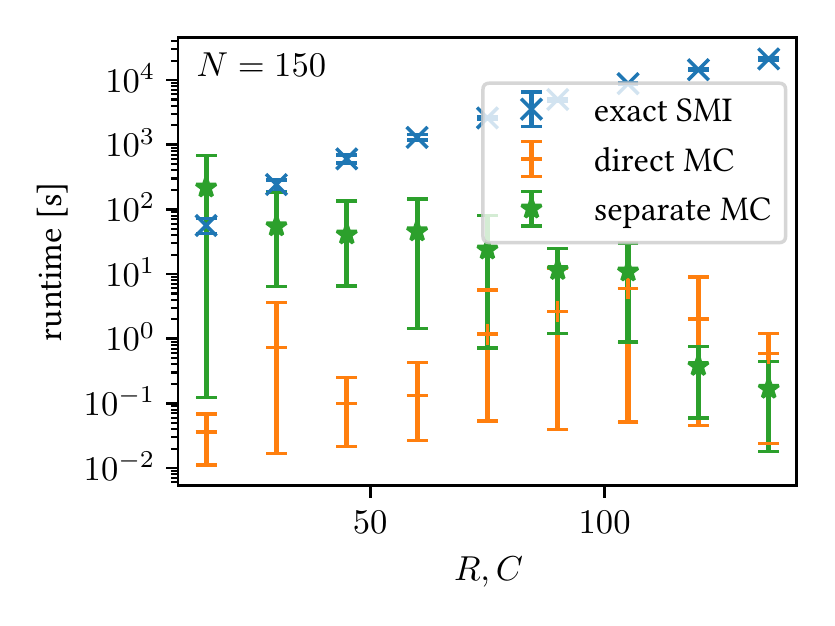}
    \caption{We compute the SMI of $10$ pairs of clusterings with a given number of clusters $R=C$ to a precision of $0.1$. Both Monte Carlo approaches outperform the exact calculation, but the direct approach is multiple orders of magnitude faster in the regime of many smaller clusters.}
    \label{fig:smi_runtime}
\end{figure}

\subsection{Real Data}\label{sec:real_data}

The previous section demonstrated how Monte Carlo approximation speeds up the EMI and SMI for random clusterings. This section shows how that speed-up translates to the practically more relevant AMI, on clusterings that arise from real datasets of various domains.

\citet{lazarenko_pairwise_2021} used 79 datasets from the \textit{Benchmark Suite for Clustering Algorithms} \cite{gagolewski_benchmark_2020} to compare the solutions of several clustering algorithms via the AMI and pairwise AMI. They used the Spearman rank correlation to measure the quality of the results. As a first benchmark, we reproduce these results in Table \ref{tab:gagolewski_results} and extend them to evaluate FastAMI. While our method has a significantly higher Spearman correlation with the exact solution than the pairwise AMI, it was slower compared to the pairwise approach. For the SMI on the other hand, $\SI{61.7}{\percent}$ of the comparisons timed out after $\SI{20}{s}$. FastSMI ($p=0.1$) completed $\SI{99.2}{\percent}$ of the benchmark with a high
Spearman correlation with the exact results, enabling SMI comparisons for medium sized datasets. However, with an average runtime of $\SI{0.013}{\second}$ per comparison, the benchmark suite is not in the domain where an approximation of the AMI is necessary due to runtime constraints.
\begin{table}
    \centering
    \begin{tabular}{lrrr}
    \toprule
    Variant & Result &  Mean &  Spearman \\
    & returned & runtime [s] & (incomplete)\\
    \midrule
sklearn AMI & $\SI{100}{\percent}$ &      $\num{3.6e-3}$ &  - \\
FastAMI & $\SI{100}{\percent}$ &      $\num{1.3e-2}$ &   $\num{1.000}$ \\
pairwise AMI & $\SI{100}{\percent}$ &      $\num{1.6e-3}$ &   $\num{0.577}$ \\
              \midrule
exact SMI & $\SI{38.3}{\percent}$ &      $\num{4.8}$ & - \\
FastSMI & $\SI{99.2}{\percent}$ &      $\num{2.4e-2}$ & $(\num{0.997})$ \\
    \bottomrule
    \end{tabular}
    \caption{Benchmark results on the \textit{Benchmark Suite for Clustering Algorithms} \cite{gagolewski_benchmark_2020} for FastAMI with precision $p=0.01$ and FastSMI with precision $p=0.1$. Both pairwise AMI and FastAMI improve the runtime of the exact version, and FastAMI nearly perfectly matches the relative ranking of the exact result. However, with a maximum $R/N$ of $\num{0.03}$, the Benchmark suite is not in the imbalanced domain where the exact AMI is impractical (Figure \ref{fig:synthetic_emi_runtime_samples}a). The exact SMI timed out after $\SI{20}{\second}$ on $\SI{61.7}{\percent}$ of the $\num{66004}$ comparisons in the Benchmark (column \textit{result returned}). FastSMI returned more results before the timeout in less time per comparison and achieved a high Spearman correlation with the exact results where they were available.}
    \label{tab:gagolewski_results}
\end{table}

Instead, we propose a benchmark based on a collection of large real-world datasets, taken from the \textit{Stanford Large Network Dataset Collection} \cite{leskovec_snap_2014}. We select the datasets designated to community detection from that collection (See Table \ref{tab:snap_results}) and apply the following six methods to find clusterings \cite{staudt_networkit_2016}: 1) Connected Components, 2) Degree Ordered Label Propagation, 3) Label Propagation, 4) Leiden, 5) Louvain, and 6) Louvain Map Equation.

The benchmark then consists of a pairwise comparison of the six clusterings, using AMI, pairwise AMI, and FastAMI. We measure the total runtime and peak memory consumption for the $\num{15}$ comparisons in each dataset and limit every individual comparison to a maximum of $\SI{2000}{\second}$ and $\SI{503.6}{\gibi\byte}$ of memory. If any of these limits is exceeded, we report the respective lower bound in Table \ref{tab:snap_results} and omit the result from further calculations. Following \citet{lazarenko_pairwise_2021}, we report the Spearman rank correlation between the exact results and the respective approximations, excluding the cases where the exact calculation was aborted. Additionally, we report the mean absolute error where exact values are available and substitute it with the according Monte Carlo estimate otherwise.

The AMI is preferable to the ARI when the clusterings are imbalanced \cite{romano_adjusting_2016}. In the synthetic experiments, we only fixed the number of clusters without explicitly specifying the imbalance. For the real datasets, we measure the balance using the mean normalized entropy
\begin{equation}
  \operatorname{Balance}(U) = \sum_{u_k\in U} \frac{H(u_k)}{|U|\log{|\operatorname{Im}u_k|}},
\end{equation}
where $U$ denotes the set of clustering solutions, obtained via the six different methods for the same dataset. A value of one indicates that all the clusterings are perfectly balanced, i.e., every cluster $i$ within a clustering $u$ has the same size. A value of zero on the other hand means that all clusterings consist of a single large cluster and are thus maximally imbalanced.

As expected from the synthetic experiments, our method outperforms the others in terms of runtime. FastAMI also has a much lower memory footprint than the {\tt sklearn} implementation and the pairwise AMI, since it does not keep the (sparse) contingency table in memory.
While FastAMI does slow down for datasets with more nodes $N$, it does not struggle with imbalanced datasets in contrast to {\tt sklearn} and the pairwise AMI. FastAMI performs comparably to the pairwise adjustment regarding the Spearman correlation. The real benefit in terms of quality is that the FastAMI results can directly be compared with the traditional AMI as demonstrated by the mean absolute error, whereas the pairwise AMI is systematically lower and somewhat harder to interpret.

\begin{table*}[t]
    % TODO: Add more expressive column names.
    % TODO: More values to be added...
      \begin{tabular}{llllrrrrr}
      \toprule
      & & & & Result & Total & Peak & MAE & Spearman \\
      Dataset & Nodes  & Balance & Method & returned & time [$\si{\second}$] & memory & (est.) & (incompl.) \\
      \midrule
      \multirow{3}{*}{Email} & \multirow{3}{*}{\num{1.0e+03}}  & \multirow{3}{*}{\num{3.5e-02}} & sklearn &                  15/15 &              $\num{0.6}$ &      $\SI{208}{\kibi\byte}$ &                - &                     - \\
                 &               &                & pairwise &                  15/15 &             $\textbf{\num{0.4}}$ &      $\SI{184}{\kibi\byte}$ &    $\num{0.357}$ &         $\num{0.943}$ \\
                 &               &                & FastAMI &                  15/15 &              $\num{0.6}$ &      $\textbf{\SI{164}{\kibi\byte}}$ &    $\textbf{\num{0.001}}$ &         $\textbf{\num{0.989}}$ \\
      \cmidrule{4-9}
      \multirow{3}{*}{DBLP} & \multirow{3}{*}{\num{3.2e+05}}  & \multirow{3}{*}{\num{4.3e-03}} & sklearn &                  13/15 &         $> \num{7539.9}$ &                           - &                - &                     - \\
                 &               &                & pairwise &                  15/15 &            $\num{361.0}$ &     $\SI{89.6}{\gibi\byte}$ &  $(\num{0.310})$ &       $(\num{0.972})$ \\
                 &               &                & FastAMI &                  15/15 &              $\textbf{\num{4.5}}$ &    $\textbf{\SI{106.4}{\mebi\byte}}$ &  $(\textbf{\num{0.003}})$ &       $(\textbf{\num{0.986}})$ \\
      \cmidrule{4-9}
      \multirow{3}{*}{Amazon} & \multirow{3}{*}{\num{3.3e+05}} & \multirow{3}{*}{\num{3.8e-03}} & sklearn &                  13/15 &         $> \num{8988.4}$ &                           - &                - &                     - \\
                 &               &               & pairwise &                  15/15 &            $\num{329.9}$ &     $\SI{57.6}{\gibi\byte}$ &  $(\num{0.421})$ &       $(\num{0.966})$ \\
                 &               &               & FastAMI &                  15/15 &              $\textbf{\num{4.5}}$ &     $\textbf{\SI{26.8}{\mebi\byte}}$ &  $(\textbf{\num{0.005}})$ &       $(\textbf{\num{0.972}})$ \\
      \cmidrule{4-9}
      \multirow{3}{*}{Youtube} & \multirow{3}{*}{\num{1.1e+06}} & \multirow{3}{*}{\num{2.4e-04}} & sklearn &                   9/15 &        $> \num{15967.7}$ &                           - &                - &                     - \\
                 &               &               & pairwise &                  13/15 &                        - &  $> \SI{503.6}{\gibi\byte}$ &  $(\num{0.260})$ &       $(\num{0.913})$ \\
                 &               &               & FastAMI &                  \textbf{15/15} &              $\textbf{\num{7.4}}$ &    $\textbf{\SI{119.4}{\mebi\byte}}$ &  $(\textbf{\num{0.005}})$ &       $(\textbf{\num{0.957}})$ \\
      \cmidrule{4-9}
      \multirow{3}{*}{Wikipedia} & \multirow{3}{*}{\num{1.8e+06}} & \multirow{3}{*}{\num{2.6e-02}} & sklearn &                  15/15 &            $\num{714.3}$ &     $\SI{12.0}{\gibi\byte}$ &                - &                     - \\
                 &               &               & pairwise &                  15/15 &             $\num{32.7}$ &      $\SI{8.4}{\gibi\byte}$ &    $\num{0.210}$ &         $\textbf{\num{0.975}}$ \\
                 &               &               & FastAMI &                  15/15 &              $\textbf{\num{7.8}}$ &    $\textbf{\SI{140.7}{\mebi\byte}}$ &    $\textbf{\num{0.001}}$ &         $\num{0.973}$ \\
      \cmidrule{4-9}
      \multirow{3}{*}{Orkut} & \multirow{3}{*}{\num{3.1e+06}}  & \multirow{3}{*}{\num{2.7e-02}} & sklearn &                  15/15 &           $\num{2072.0}$ &     $\SI{27.3}{\gibi\byte}$ &                - &                     - \\
                 &               &               & pairwise &                  15/15 &             $\num{69.9}$ &     $\SI{19.2}{\gibi\byte}$ &    $\num{0.276}$ &         $\num{0.967}$ \\
                 &               &               & FastAMI &                  15/15 &             $\textbf{\num{15.4}}$ &    $\textbf{\SI{341.0}{\mebi\byte}}$ &    $\textbf{\num{0.001}}$ &         $\textbf{\num{0.991}}$ \\
      \cmidrule{4-9}
      \multirow{3}{*}{LiveJournal} & \multirow{3}{*}{\num{4.0e+06}} & \multirow{3}{*}{\num{2.7e-04}} & sklearn &                  10/15 &        $> \num{14496.1}$ &                           - &                - &                     - \\
                 &               &               & pairwise &                  13/15 &                        - &  $> \SI{503.6}{\gibi\byte}$ &  $(\num{0.149})$ &       $(\num{0.937})$ \\
                 &               &               & FastAMI &                  \textbf{15/15} &             $\textbf{\num{23.9}}$ &    $\textbf{\SI{340.8}{\mebi\byte}}$ &  $(\textbf{\num{0.003}})$ &       $(\num{0.937})$ \\
      \cmidrule{4-9}
      \multirow{3}{*}{Friendster} & \multirow{3}{*}{\num{6.6e+07}} & \multirow{3}{*}{\num{1.3e-05}} & sklearn &                    0/15 &                        - &  $> \SI{503.6}{\gibi\byte}$ &                - &                     - \\
                 &               &               & pairwise &                   6/15 &                        - &  $> \SI{503.6}{\gibi\byte}$ &  $(\num{0.031})$ &         - \\
                 &               &               & FastAMI &                  \textbf{15/15} &            $\textbf{\num{439.3}}$ &      $\textbf{\SI{4.6}{\gibi\byte}}$ &  $(\textbf{\num{0.003}})$ &         - \\
      \bottomrule
      \end{tabular}
      \caption{Benchmark of the AMI ({\tt sklearn}), the pairwise AMI and FastAMI, regarding time and memory complexity. We evaluated these methods on the results of six different community detection algorithms for large graph datasets, taken from the Stanford Large Network Dataset Collection \cite{leskovec_snap_2014}. In the column \textit{result returned}, we report how often an algorithm successfully terminated and how often it exceeded the time or memory limit of $\SI{2000}{s}$ and $\SI{503.6}{\gibi\byte}$. Our method outperforms {\tt sklearn} and Lazarenko's pairwise algorithm both in the runtime and the memory footprint, enabling AMI comparisons for dataset sizes that were previously inaccessible. While the pairwise AMI is comparable in terms of Spearman correlation, our method has the additional benefit of tunable absolute errors, allowing for a direct comparison with the exact metric. The mean absolute error is given as Monte Carlo estimate and the Spearman correlation was calculated on a subset when the exact solution was not available.}
    \label{tab:snap_results}
  \end{table*}

\section{Conclusion}

This paper presents an effective and practical method for approximating the Adjusted Mutual Information. FastAMI takes advantage of the sparsity and low variance of contingency tables of clusterings, enabling AMI comparisons on datasets that were computationally inaccessible before. We analyzed the behavior of our approximation scheme based on synthetic data and demonstrated state-of-the-art performance on a suite of real datasets. In future work, one could readily parallelize Algorithm \ref{alg:emi_fixed_precision} to improve its performance further. %using e.g., Chan's method instead of Welford's algorithm.
We extended our algorithm to the standardized mutual information, rendering variance-adjusted clustering comparisons computationally feasible for moderately sized datasets.

\appendix
\section*{Acknowledgments}
This work was funded by the \textit{Bayerischen Verbundförderprogramm (BayVFP) – Förderlinie Digitalisierung – Förderbereich Informations- und Kommunikationstechnik} of the Bavarian Ministry of Economic Affairs, Regional Development and Energy and supported by \textit{Bayern Innovativ – Bayerische Gesellschaft für Innovation und Wissenstransfer mbH}.

\bibliography{bibliography}

\end{document}